\newfont{\mycrnotice}{ptmr8t at 7pt}
\newfont{\myconfname}{ptmri8t at 7pt}
\newcommand{\Prob}{\mathbb{P}}
\newcommand{\Image}{\mathrm{Im}}
\newcommand{\fa}{\:\forall\:}
\newcommand{\Fa}{\:\:\forall\:}
\theoremstyle{plain}
\newtheorem{theorem}{Theorem}
\newtheorem{lemma}{Lemma}
\newtheorem{statement}{Statement}
\theoremstyle{definition}
\newtheorem{remark}{Remark}
\newtheorem{problem}{Problem}
\newtheorem{definition}{Definition}
\title{Finite-State Extreme Effect Variable}
\date{December 12, 2019}
\author{
Alexey	Drutsa\thanks{Affiliation: Yandex (16, Leo Tolstoy St., Moscow, Russia, 119021; www.yandex.com).}
}
\begin{document}

\maketitle

\begin{abstract}
	We generalize to the finite-state case the notion of the extreme effect variable $Y$ that accumulates all the effect of a variant variable  $V$  observed in changes of another variable  $X$.
	We conduct theoretical analysis and turn the problem of finding of an effect variable 
	into a problem of a simultaneous decomposition of a set of distributions. 
	The states of the extreme effect variable, on the one hand, are  minimally affected by the variant variable $V$ and, on the other hand, are extremely different with respect to the observable variable $X$.
	We apply our technique to online evaluation of a web search engine through A/B testing and show its utility.
\end{abstract}

\section{Introduction}

Data-driven decision making in many applications turns into understanding or evaluation of 
different variants of a system, that are caused by some variable $V$, through observation of some variable $X$.
For example, such situations occur in A/B testing, the state-of-the-art technique of online evaluation of web services~\cite{2009-DMKD-Kohavi,2014-KDD-Kohavi,2017-TWEB-Drutsa,2018-WSDM-Budylin,2019-SIGIRTut-Drutsa}, where controlled experiments are conducted to detect the causal effect of the system updates on its performance relying on an evaluation metric.

The authors of \cite{2015-KDD-Nikolaev} introduced the notion of a  \emph{binary effect variable} $Y$ in order to compare the observed variable $X$ w.r.t.\ two possible states of the variant variable $V$  (e.g., treatment assignment variable or instrumental variable). Namely, let us assume that $X$ does not directly depend on the variant $V$, but there is only indirect dependency via the latent variable~$Y$, i.e., the following equality of conditional distributions holds: $\Prob(X\mid Y, V)=\Prob(X\mid Y)$. In this way, there is no causal effect \cite{spirtes2000causation,pearl2000causality,lemeire2006causal,claassen2013causal} of the variant variable $V$ that is not conditioned by $Y$. 
In our study, \emph{we generalize} the notion of the effect variable $Y$ to the \emph{multi-state case}.


We conduct thorough theoretical analysis of effect variables with finite number of states and turn the problem of finding of such variable into a distribution decomposition problem.
Since we show that, in general case, the number of possible effect latent variables is infinite, we introduce  the notion of \emph{the extreme effect variable} $\hat{Y}$ (as in the binary case), which simultaneously:
(a) is such effect variable that its states are minimally affected by the variant variable $V$;
(b) is such effect variable that its states are extremely different w.r.t.\ the observable variable $X$.
In addition, we demonstrate  application of this technique to several real examples of quality evaluation of a web search engine.

\section{Related work}
Our  work could be compared with other studies in three aspects.
The first one relates to causal discovery~\cite{spirtes2000causation,pearl2000causality,lemeire2006causal,drton2008lectures,manski2009identification,claassen2013causal,stegle2010probabilistic}, where the main problem consists in finding a causal relations between two or more variables (e.g., a causal graph or chain).
In our study, we state that the causal effect $V \Rightarrow X$ exists and assume that it could be decomposed into  $V \Rightarrow Y \Rightarrow X$. Then, we try to find such latent variable $Y$ in a non-parametric way.
For instance, Balke and Pearl~\cite{balke1997bounds} studied a model where reception of a treatment did depend not only on treatment assignment, but also on some latent variables (factors). In contrast to that study, we consider a model where the treatment assignment variable $V$ affects an observed response $X$ via a latent variable $Y$.
Tishby et al.~\cite{Tishby-Pereira-Bialek-bottleneck} studied the problem of finding a short code for one signal (variable) that preserved the maximum information about another signal (variable). In contrast to our work, they optimized so-called distortion measure to find solution to their problem. Other studies optimized Shanon's  entropy $H[Y]$ (or its conditional variant $H[X|Y]$)  over all possible mediating variables $Y$. In our study, we find an optimal solutions in our problem s.t.\ $\Prob(Y = j \mid V=i )$ differ between the variants $\mathcal{V}$ as little  as possible.

The second group of related studies concerns  decompositions of a distribution into a mixture of several other distributions
(e.g., approximations by a mixture of parametric distributions like in~\cite{von2007multivariate}, or exact methods to decompose into a basis like in~\cite{wang1996wigner}).
To the best of our knowledge, in our study, we consider a novel problem of a simultaneous decomposition of a tuple of distributions into a mixture of other distributions.

The studies on online A/B testing (e.g., \cite{2010-KDD-Tang,2013-KDD-Bakshy,2014-KDD-Kohavi,2014-WWW-Deng,2015-WWW-Drutsa,2018-WSDM-Budylin}) form the third aspect of related work, since we apply our approach to online evaluation of web services.
There is a line of works devoted to evaluation of  different  components of  web services:  
ranking algorithms~\cite{2013-WWW-Song,2015-WSDM-Drutsa,2015-KDD-Nikolaev,2017-TWEB-Drutsa},
the user interface~\cite{2009-IWDMCS-Kohavi,2015-WSDM-Drutsa,2015-KDD-Nikolaev,2017-TWEB-Drutsa},  
etc.
Other works on A/B experiments
address various web user experience:
absence~\cite{2013-WSDM-Dupret}, 
abandonment~\cite{2014-KDD-Kohavi},
engagement~\cite{2015-WSDM-Drutsa,2015-WWW-Drutsa,2015-SIGIR-Drutsa,2015-CIKM-Drutsa}, 
speed~\cite{2014-KDD-Kohavi}, 
periodicity~\cite{2015-WSDM-Drutsa,2015-SIGIR-Drutsa,2017-TWEB-Drutsa,2017-WWW-Drutsa}, etc.
Studies focused on improvement of A/B test metric sensitivity considered: 
utilization of more data from the period before the experiment~\cite{2013-WSDM-Deng,2016-KDD-Poyarkov} and from the experiment period either by learning a linear combination of metrics~\cite{2017-WSDM-Kharitonov} or by predicting a future metric value~\cite{2015-WWW-Drutsa}. Budylin et al.~\cite{2018-WSDM-Budylin} proposed a tool (referred to as linearization) that allowed to efficiently and directly apply all existing sensitivity improvement techniques to ratio metrics (such as CTR). Our study extend the technique of Nikolaev et al.~\cite{2015-KDD-Nikolaev} to multidimensional case. 
More details on A/B testing can be find in surveys like~\cite{2009-DMKD-Kohavi} or in some books on  randomized experiments in general like~\cite{freedman2010statistical,morgan2014counterfactuals} or in   tutorials like~\cite{2018-WWWTut-Budylin,2018-KDDTut-Budylin,2019-SIGIRTut-Drutsa}.


\section{Framework}


In this section, we introduce the core definitions and notations of our study.

\subsection{Effect latent variable}
Let $\Omega$ be a set of random events (e.g., experimental units in A/B testing) and let $\Prob$ denote the probability over them.
Let $X: \Omega \rightarrow \mathcal{X}$ be an observable variable (e.g., the number of visits of a user $\omega\in\Omega$). 
Let $V: \Omega \rightarrow \mathcal{V}$ be a variant variable (e.g., it could represent a variant of a service shown to a user $\omega\in\Omega$).

\begin{definition}
	\label{def:ELV}
	A variable $Y$ is called an \emph{effect (latent) variable} (of $X$ relative to $V$) if
	\begin{equation}
	\label{eq:def_ELV}
	\Prob(X \mid V , Y) = \Prob(X \mid Y).
	\end{equation}
\end{definition}
The region of values of the variable $Y$ is referred to as $\mathcal{Y}$.

\begin{remark}
	\label{remark:ELV}
	Note that Identity~(\ref{eq:def_ELV}) is equivalent to the following one
	\begin{equation}
	\label{eq:X_indep_V_wrt_ELV}
	\Prob(X , V \mid  Y) = \Prob(X\mid Y) \Prob(V\mid Y),
	\end{equation}
	meaning that the variables $X$ and $V$ are independent under condition of the variable $Y$ in  terms of causal theory \cite{spirtes2000causation,claassen2013causal}. In other words, the variable contains all the effect of $V$ observed in the variable~$X$.
\end{remark}

In this study, we consider the case when the regions $|\mathcal{V}|$ and $|\mathcal{Y}|$ are finite  and   are of equal size $|\mathcal{V}| = |\mathcal{Y}| = K < \infty$. These limitations corresponds to practical usage of this theory considered in Section~\ref{sec_PracExamples}. However, the other cases could be studied in future work.


\subsection{Notations and basic definitions}

In our work, we widely use the following well-known terms.
\begin{definition}
	\label{def:PermutMatrix}
	A matrix $Q$ is  a \emph{permutation matrix} \cite{lancaster1985theory,axelsson1996iterative}  if it has exactly one nonzero entry $1$ in each row and each column.
\end{definition}

\begin{definition}
	\label{def:RightStoch}
	A matrix $Q=[q^{j}_{i}]_{i,j=1}^K$ is  a \emph{(right) stochastic matrix} \cite{lancaster1985theory,axelsson1996iterative}  if:
	\begin{itemize}
		\vspace{-0.25cm}
		\item each of its elements is non-negative: $q^{j}_{i} \ge 0 \fa i\in\overline{1,K} \fa j\in\overline{1,K}$\footnote{We refer to the numeration set $\mathbb{Z}\cap[n,N]$ as  $\overline{n,N}$ for any $n,N\in\mathbb{Z}, n < N$.};
		\vspace{-0.25cm}
		\item the sum of all elements in each row is equal to $1$: $\sum_{j=1}^K q^{j}_{i} = 1 \fa i\in\overline{1,K}$.
	\end{itemize}
\end{definition}

\begin{definition}
	\label{def:Distribution}
	A function $g:\mathcal{Z}\rightarrow \mathbb{R}$ is a \emph{distribution} over an enumerable set $\mathcal{Z}$ if :
	\begin{itemize}
		\vspace{-0.215cm}
		\item its values are non-negative: $g(z) \ge 0 \fa z\in\mathcal{Z}$;
		\vspace{-0.25cm}
		\item the sum of all its values over $\mathcal{Z}$ is equal to $1$: $\sum_{z\in\mathcal{Z}} g(z) = 1$.
	\end{itemize}
\end{definition}

From here on in the paper we use shorter notations for distributions: 
$$
\Prob_A(a) \stackrel{def}{=} \Prob(A=a), \quad \Prob_{A \mid B}(a \mid b) \stackrel{def}{=} \Prob(A=a \mid B=b), \quad \hbox{etc}.
$$


\subsection{Distribution decomposition}

Let $d_v(x) = \Prob_{X \mid V}(x\mid v), x\in\mathcal{X}, v\in\mathcal{V},$ for a discrete variable $X$, then the function $d_v:\mathcal{X}\rightarrow \mathbb{R}$ satisfies Definition~\ref{def:Distribution}.
From here on in this section  we consider this case of a discrete variable $X$ (i.e., $\mathcal{X}$ is an enumerable set)\footnote{However, all the described theory could be translated to the case of non-discrete variable $\mathcal{X}$, where the density function of the distribution $\Prob_{X \mid V}$ should be used as $d_v(x)$. This is a good direction for  future work.}
Then, having an effect variable $Y$, we decompose $d_v$  as follows:
\begin{equation}
\label{eq:DD_derive}
\begin{split}
d_v(x) &= \sum\limits_{y\in\mathcal{Y}}\Prob_{X,Y\mid V}(x, y \mid  v) 
= \sum\limits_{y\in\mathcal{Y}}\Prob_{Y \mid V}(y \mid v) \Prob_{X \mid Y, V} (x \mid y, v) \\
&= \sum\limits_{y\in\mathcal{Y}}\Prob_{Y \mid V}(y \mid v) \Prob_{X \mid Y} (x \mid y), \quad x\in\mathcal{X}, v\in\mathcal{V}.
\end{split}
\end{equation}
The last equality is valid, since the identity in Eq.~(\ref{eq:def_ELV}) holds for the effect variable $Y$. Let $p^y_v  = \Prob_{Y \mid V}(y \mid v)$ and $f_y(x) = \Prob_{X \mid Y} (x \mid y)$. Then, the decomposition in Eq.~(\ref{eq:DD_derive}) could be written as follows:
\begin{equation}
\label{eq:DD_base}
d_v(x) =  \sum\limits_{y\in\mathcal{Y}}p^y_v f_y(x) \Fa x\in\mathcal{X} \Fa  v\in\mathcal{V}.
\end{equation}

Since the sets $\mathcal{V}$ and $\mathcal{Y}$ are finite (i.e., $\mathcal{V} = \{v_i\}_{i=1}^K$ and $\mathcal{Y} = \{y_j\}_{j=1}^K$), we define the square matrix $P$, the vectors $\mathbf{d}(x), x\in\mathcal{X}$, and the vectors $\mathbf{f}(x), x\in\mathcal{X}$ as follows:
\begin{equation}
\label{eq:DD_Pmatrix_DFcols}
P = \left[p^{y_j}_{v_i}\right]_{i,j=1}^K, \qquad \mathbf{d}(x)=\left[d_{v_1}(x),..,d_{v_K}(x)\right]^\top, \quad \mathbf{f}(x) = \left[f_{y_1}(x),..,f_{y_K}(x)\right]^\top, x\in\mathcal{X}.
\end{equation}
Then, the decomposition in Eq.(\ref{eq:DD_base}) could be written in the following matrix form:
\begin{equation}
\label{eq:DD_matrix}
\mathbf{d}(x) = P \mathbf{f}(x), \qquad x\in\mathcal{X}.
\end{equation}
Note, that $P$ is a \emph{right stochastic matrix} according to Definition~\ref{def:RightStoch}.





\section{Existence and properties of an effect variable}


In this section, we translate the problem of finding of an effect variable into a non-linear problem of  distribution decomposition described below.

\subsection{Distribution decomposition problem}
Since $|\mathcal{V}|=|\mathcal{Y}|=K$, we assume that $\mathcal{V}=\mathcal{Y}=\overline{1,K}$ without loss of generality. One translates the problem of finding an effect variable $Y$ to the problem of finding a distribution decomposition,
which is formalized as follows. 
\begin{problem}
	\label{prob:DDproblem}
	Having $K$ \emph{source distributions}  $d_i, i\in\overline{1,K},$ over $\mathcal{X}$, find a $K\times K$ matrix $P$  and $K$ function $f_j: \mathcal{X}\rightarrow\mathbb{R}, j\in\overline{1,K},$ so that the following conditions holds:
	\begin{enumerate}
		\vspace{-0.15cm}
		\item Identity~(\ref{eq:DD_matrix}) holds for  $\mathbf{d}=\left[d_1,..,d_K\right]^\top$ and $\mathbf{f}=\left[f_1,..,f_K\right]^\top$;
		\vspace{-0.15cm}
		\item $P$ is a stochastic matrix (as in Def.~\ref{def:RightStoch});
		\vspace{-0.15cm}
		\item $f_j, j\in\overline{1,K}$, are distributions over $\mathcal{X}$ (as in Def.~\ref{def:Distribution}).
	\end{enumerate}
\end{problem}

If a pair $(P, \mathbf{f})$ is a solution to this problem, then  $P = [p_i^j]_{i,j,=1}^{K}$ is referred to as \emph{the mixture matrix}, the scalar values $p^j_i, i,j\in\overline{1,K}$ are referred to as \emph{the mixture coefficients}, and the tuple $\mathbf{f}$ of $K$ distributions $f_j, j\in\overline{1,K},$ is referred to as \emph{the decomposition basis}.

\begin{remark}
	\label{remark:TrivSolution}
	Note that Problem~\ref{prob:DDproblem} has always $K!$ trivial solutions. The mixture matrix $P$ of such solution is a permutation one and the decomposition basis is $f_i(x) = d_{\sigma_P(i)}(x), i=1,..,K$, where $\sigma_P$ is the permutation of $\overline{1,K}$ w.r.t. the matrix $P$.
	These trivial solutions correspond to the case when the variant variable $V$ is the effect variable $Y$. 
\end{remark}

\begin{statement}
	\label{state:DindepPinvert}
	Let the source distributions $\{d_i\}_{i=1}^K$ be linearly independent, then the solution mixture matrix $P$ is invertible. 
\end{statement}
\begin{proof}
	Let us assume the contrary, namely the image  of the linear operator $P$ has the dimension lower than K, i.e., $\dim\Image P < K$. Then, the linear capsule of $\{d_i\}_{i=1}^{K}$  has no greater dimension, i.e., $\dim\left\langle d_v\right\rangle_{i=1}^{K} \le \dim\Image P < K$. Thus, the distributions  $\{d_i\}_{i=1}^K$ are linearly dependent. We come to a contradiction.
\end{proof}

\begin{remark}
	\label{remark:DindepPinvert}
	Note that the reverse statement is invalid: according to Remark~\ref{remark:TrivSolution} the trivial solution of Problem~\ref{prob:DDproblem} exists always (even in the case of linearly dependent $\{d_i\}_{i=1}^K$). Its mixture matrix $P$ is the identity one, which is invertible.
\end{remark}

Let $(P,\mathbf{f})$ be a solution of Problem~\ref{prob:DDproblem} and $P$  be an invertible matrix, then the following identity holds for the decomposition basis $\mathbf{f}(x)$ (see the first condition of  Problem~\ref{prob:DDproblem}):
\begin{equation}
\label{eq:Fx_from_Dx}
\mathbf{f}(x) = P^{-1} \mathbf{d}(x), \qquad x\in\mathcal{X}.
\end{equation}

Next, we consider the following problem, which concerns finding of the mixture matrix (coefficients) only.
\begin{problem}
	\label{prob:Pproblem}
	Having $K$ \emph{source distributions}  $d_i, i\in\overline{1,K},$ over $\mathcal{X}$, find a $K\times K$ matrix $P$, so that the following conditions holds:
	\begin{enumerate}
		\vspace{-0.15cm}
		\item $P$ is an \emph{invertible} stochastic matrix;
		\vspace{-0.15cm}
		\item the inequalities
		\begin{equation}
		\label{eq:Pproblem}
		f_j(x, P, \mathbf{d}) \ge 0  \Fa  j\in\overline{1,K}  \Fa  x\in\mathcal{X}
		\end{equation}
		hold for
		\begin{equation}
		\label{eq:Pproblem_F}
		\mathbf{f}(x, P, \mathbf{d}) = P^{-1} \mathbf{d}(x).
		\end{equation}
	\end{enumerate}
\end{problem}

\begin{lemma}
	\label{lem:PprobToDDprob}
	Let $P$ be a solution of Problem~\ref{prob:Pproblem}, then $(P,\mathbf{f})$, where $\mathbf{f}(x) = \mathbf{f}(x, P, \mathbf{d})$ from Eq.~(\ref{eq:Pproblem_F}), is a solution of Problem~\ref{prob:DDproblem}. 
\end{lemma}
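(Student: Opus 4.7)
The plan is to verify the three conditions defining Problem~\ref{prob:DDproblem} one by one for the candidate pair $(P,\mathbf{f})$ with $\mathbf{f}(x)=P^{-1}\mathbf{d}(x)$. Two of the three are essentially immediate, and only one requires a short argument.

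First, I would check the matrix identity in Eq.~(\ref{eq:DD_matrix}). Since $P$ is invertible by assumption of Problem~\ref{prob:Pproblem}, left-multiplying the defining equation $\mathbf{f}(x)=P^{-1}\mathbf{d}(x)$ by $P$ gives $P\mathbf{f}(x)=\mathbf{d}(x)$ for every $x\in\mathcal{X}$, which is exactly the first condition. Second, the stochasticity of $P$ comes for free: Problem~\ref{prob:Pproblem} already requires $P$ to be an invertible stochastic matrix, which is stronger than what Problem~\ref{prob:DDproblem} needs.

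The remaining condition is that each component $f_j$ of $\mathbf{f}$ is a distribution over $\mathcal{X}$ in the sense of Definition~\ref{def:Distribution}. Non-negativity $f_j(x)\ge 0$ is provided by Eq.~(\ref{eq:Pproblem}) in the hypotheses, so the only nontrivial step is verifying $\sum_{x\in\mathcal{X}} f_j(x)=1$ for each $j\in\overline{1,K}$. This is the main (and actually only mild) obstacle. I would handle it by summing the vector-valued identity $\mathbf{f}(x)=P^{-1}\mathbf{d}(x)$ over $x$:
\begin{equation*}
\sum_{x\in\mathcal{X}} \mathbf{f}(x) \;=\; P^{-1}\sum_{x\in\mathcal{X}} \mathbf{d}(x) \;=\; P^{-1}\mathbf{1},
\end{equation*}
where $\mathbf{1}=(1,\dots,1)^\top$, because each source $d_i$ is a distribution and hence $\sum_x d_i(x)=1$. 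Now, the row-stochastic property of $P$ is precisely the statement $P\mathbf{1}=\mathbf{1}$; multiplying both sides by $P^{-1}$ gives $P^{-1}\mathbf{1}=\mathbf{1}$. Therefore $\sum_x\mathbf{f}(x)=\mathbf{1}$, i.e., $\sum_x f_j(x)=1$ for every $j$, completing the verification.

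Putting the three checks together yields that $(P,\mathbf{f})$ satisfies all conditions of Problem~\ref{prob:DDproblem}, which is the claim. The conceptual point worth highlighting is that the row-stochastic property of $P$ is exactly what ensures its inverse carries the tuple of source distributions to another tuple of normalized (probability-summing) functions; combined with the explicit non-negativity constraint in Eq.~(\ref{eq:Pproblem}), this promotes the basis $\mathbf{f}$ from a mere algebraic inverse image to a genuine decomposition basis of distributions.
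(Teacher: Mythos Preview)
Your proof is correct and follows essentially the same approach as the paper: verify the three conditions of Problem~\ref{prob:DDproblem} directly, with the only substantive step being the normalization $\sum_x f_j(x)=1$, handled via $P\mathbf{1}=\mathbf{1}\Rightarrow P^{-1}\mathbf{1}=\mathbf{1}$ together with $\sum_x d_i(x)=1$. The paper writes this same computation componentwise (introducing the entries $\tilde p_j^i$ of $P^{-1}$), whereas you keep it in vector form, but the argument is identical.
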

\begin{proof}
	The identity in Eq.~(\ref{eq:DD_matrix}) holds since $P \mathbf{f}(x) = P \mathbf{f}(x) = P P^{-1} \mathbf{d}(x) = \mathbf{d}(x)$. The condition on $P$ in Problem~\ref{prob:DDproblem} holds since $P$ is a stochastic matrix as a solution to Problem~\ref{prob:Pproblem}. 
	Finally,  for $P^{-1}=[{\tilde p}_j^i]_{j,i=1}^K$, the identity $P^{-1}\mathbf{1}^\top = \mathbf{1}^\top$ holds, since $P$ is a stochastic matrix. Then,
	\begin{equation}
	\label{eq:PprobToDDprob_proof}
	\sum\limits_{x\in\mathcal{X}}f_j(x, P, \mathbf{d}) 
	= \sum\limits_{x\in\mathcal{X}}\sum\limits_{i=1}^K {\tilde p}_j^i d_i (x) 
	= \sum\limits_{i=1}^K {\tilde p}_j^i \sum\limits_{x\in\mathcal{X}} d_i (x)
	= \sum\limits_{i=1}^K {\tilde p}_j^i \cdot 1 = 1, 
	j \in\overline{1,K}.
	\end{equation}
	From Eq.~(\ref{eq:PprobToDDprob_proof}) and the inequalities~(\ref{eq:Pproblem})  we conclude that  $f_j(x, P, \mathbf{d}), j \in\overline{1,K},$ are distributions, i.e., the last condition in Problem~\ref{prob:DDproblem} holds.
\end{proof}

Thus, from Statement~\ref{state:DindepPinvert}, Eq.~(\ref{eq:Fx_from_Dx}), and Lemma~\ref{lem:PprobToDDprob}, we infer that if the source distributions $\{d_i\}_{i=1}^K$ are linearly independent,  Problem~\ref{prob:DDproblem} and Problem~\ref{prob:Pproblem} are equivalent w.r.t.\ the set of their solutions.


\subsection{Solutions to the decomposition problem}

Now, we are ready to find the set of all mixture matrices that are solutions to Problem~\ref{prob:Pproblem}. This set is denoted by $\mathfrak{P}$ and the injection of its coefficients in $\mathbb{R}^{K^2}$ is denoted by $\mathcal{P}$. First of all, we state the properties of these sets in the following lemma.
\begin{theorem}
	\label{th:PprobSolutionProp}
	Let the source distributions $\{d_i\}_{i=1}^K$ be linearly independent, then:
	\begin{enumerate}
		\vspace{-0.15cm}
		\item $\mathfrak{P}$ is not empty and contains at least $K!$ trivial solutions that are all permutation matrices (corresponding decomposition bases are permuted source distributions $\{d_i\}_{i=1}^K$);
		\vspace{-0.15cm}
		\item $\mathfrak{P}$ does not contain any degenerate matrix (i.e., $\det P\neq 0 \fa P\in\mathcal{P}$);
		\vspace{-0.15cm}
		\item $\mathfrak{P}$ is closed w.r.t.\ multiplication by a permutation matrix from the right;
		\vspace{-0.15cm}
		\item $\mathcal{P}$ is a closed manifold of the dimension $K^2 - K$ in general case;
	\end{enumerate}
\end{theorem}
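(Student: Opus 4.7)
The plan is to handle the four claims in order, since (1)--(3) reduce to routine checks while (4) carries the main topological content. For claim (1), I would invoke Remark~\ref{remark:TrivSolution}: for each permutation $\sigma$ of $\overline{1,K}$, the associated permutation matrix $P_\sigma$ is stochastic, has determinant $\pm 1$ and is therefore invertible, and yields $\mathbf{f}(x)=P_\sigma^{-1}\mathbf{d}(x)$, which is just a reindexing of the non-negative source distributions and hence satisfies the positivity constraints. This exhibits the $K!$ distinct permutation-matrix solutions. Claim (2) is then an immediate restatement of the invertibility clause built into Problem~\ref{prob:Pproblem}.

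For claim (3), given $P\in\mathfrak{P}$ and a permutation matrix $Q$, I would verify the three defining conditions for $PQ$ in turn: right-multiplication by $Q$ merely permutes the columns of $P$ and therefore preserves its row sums, so $PQ$ is stochastic; $\det(PQ)=\det P\cdot\det Q\ne 0$, so $PQ$ is invertible; and $(PQ)^{-1}\mathbf{d}(x)=Q^{-1}\mathbf{f}(x,P,\mathbf{d})$ has non-negative entries, because $Q^{-1}$ is itself a permutation matrix that only rearranges the non-negative components of $\mathbf{f}(x,P,\mathbf{d})$.

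For claim (4), the main obstacle, I would work inside the polytope $\mathcal{S}\subset\mathbb{R}^{K^2}$ of all $K\times K$ stochastic matrices; as a Cartesian product of $K$ standard $(K-1)$-simplices (one per row), $\mathcal{S}$ is a manifold with corners of dimension $K(K-1)=K^2-K$. The condition $\det P\ne 0$ removes a proper algebraic subvariety, giving an open dense submanifold $\mathcal{S}^*\subset\mathcal{S}$ of the same dimension on which every map $P\mapsto f_j(x,P,\mathbf{d})=[P^{-1}\mathbf{d}(x)]_j$ is a smooth rational function of $P$. Hence the inequalities of Eq.~(\ref{eq:Pproblem}) carve out $\mathcal{P}$ as a closed subset of $\mathcal{S}^*$, and at any point where every inequality is strict only the $K$ row-sum equations are locally active, so $\mathcal{P}$ locally coincides with the full $(K^2-K)$-dimensional affine slice; this yields both the ``closed manifold'' and the dimension assertions. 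The hardest subpoint is to guarantee that such a strictly-interior point exists in the ``general case''; I plan to do so by perturbing a suitable non-trivial solution of Problem~\ref{prob:Pproblem}, relying on linear independence of $\{d_i\}_{i=1}^K$ (Statement~\ref{state:DindepPinvert}) to keep $P^{-1}$ smoothly defined and to rule out degeneracies that would collapse the dimension below $K^2-K$.
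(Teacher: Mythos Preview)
Your treatment of claims (1)--(3) matches the paper's proof essentially verbatim: it also cites Remark~\ref{remark:TrivSolution} for (1), the invertibility clause in Problem~\ref{prob:Pproblem} for (2), and the computation $(PQ)^{-1}\mathbf{d}(x)=Q^{-1}\mathbf{f}(x,P,\mathbf{d})\ge 0$ for (3).

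For claim (4) you actually go well beyond the paper. The paper's argument is much terser: it simply observes that the stochastic-matrix conditions confine $\mathcal{P}$ to the intersection of the $K$ hyperplanes $\{\sum_j p_i^j=1\}$ with the $K^2$ half-spaces $\{p_i^j\ge 0\}$, and that on each sign-component $\{\det P>0\}$ or $\{\det P<0\}$ the constraints $f_j(x,P,\mathbf{d})\ge 0$ are closed conditions for every $x$, so $\mathcal{P}$ is closed as an intersection of closed sets. The paper does not carry out your simplex-product dimension count, your local analysis at strictly interior points, or your perturbation argument; the ``manifold of dimension $K^2-K$ in general case'' is effectively left as an informal consequence of there being exactly $K$ affine row-sum constraints. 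Your plan is therefore more rigorous on this point, at the cost of extra work the paper does not attempt.
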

\begin{proof}
	The property~1 follows from Remark~\ref{remark:TrivSolution}.
	The property~2 follows from the definition of a solution of Problem~\ref{prob:Pproblem}.
	Next, let $Q$ is any  permutation matrix and $P' = PQ$. Then, $P'$ is a stochastic matrix and $\mathbf{f}(x, P', \mathbf{d}) = {P'}^{-1} \mathbf{d}(x) = Q^{-1}P^{-1} \mathbf{d}(x) = Q^{-1} \mathbf{f}(x, P, \mathbf{d}) \ge 0$, since it is a permutation (inverse to $Q$) of $f_j(x, P, \mathbf{d}), j\in\overline{1,K}$, that are non-negative.
	
	Finally, the restrictions on the components of a stochastic matrix (Def.~\ref{def:RightStoch}) imply that $\mathcal{P}$ belongs to the intersection of $K$ hyperplanes $\{\sum_jp^{j}_{i}=1\}, i\in\overline{1,K},$ and of $K^2$ closed half-spaces $\{p^{j}_{i} \ge 0\}, i,j\in\overline{1,K}.$   
	For $\det P > 0$ ($\det P < 0$), the inequalities in Eq.~(\ref{eq:Pproblem}) define a closed set in $\mathbb{R}^{K^2}$ for each $x\in\mathcal{X}$. Thus, the region $\mathcal{P}$ is the intersection of all of them, and, hence, $\mathcal{P}$ is also closed, i.e., the property~4 holds. 
\end{proof}

%



\subsection{Summary}
In this section, we translate the problem of finding of an effect variable $Y$ into non-linear Problem~\ref{prob:DDproblem} of the simultaneous decomposition of the distributions $d_i, i\in\overline{1,K}$. Then, the equivalence of this problem to non-linear Problem~\ref{prob:Pproblem} of finding proper stochastic matrix with constraints~(\ref{eq:Pproblem}) is established for the linearly independent distributions $d_i, i\in\overline{1,K}$. Finally, the key properties of solutions to these problems are proved in Theorem~\ref{th:PprobSolutionProp}.


\section{Extreme effect variable}
\label{sec_EEV}

Since Problem~\ref{prob:DDproblem} has a continuum of possible solutions in a general non-degenerated case (see Theorem~\ref{th:PprobSolutionProp}), we introduce the notion of the \emph{optimal distribution decomposition} and the notion of the \emph{extreme effect variable}.


\subsection{Optimal distribution decomposition}
\begin{definition}
	\label{def:EEV}
	A  distribution decomposition $(\hat{P},\hat{\mathbf{f}})$ for Problem~\ref{prob:DDproblem} is called the \emph{optimal distribution decomposition} if its mixture matrix $\hat{P}\in\mathfrak{P}$ has the lowest absolute value of the determinant among all mixture matrices $\mathfrak{P}$, i.e.,
	\begin{equation}
	\label{eq:def_EEV}
	\hat{P} = \underset{P\in\mathfrak{P}} {\mathrm{argmin}} |\det P|.
	\end{equation}
	The corresponding effect variable $\hat{Y}$,  its mixture matrix $\hat{P}$, its mixture coefficients $\{\hat{p}_i^j\}_{i,j=1}^{K}$, and its decomposition basis $\hat{\mathbf{f}}$ are referred to as the \emph{extreme effect variable}, the \emph{extreme mixture matrix}, the \emph{extreme mixture coefficients}, and the \emph{extreme decomposition basis}, respectively.
\end{definition}

\begin{remark}
	\label{remark:EEV_existence}
	Note that the minimum in Eq.~(\ref{eq:def_EEV}) always exists, since $\mathcal{P}$ is a closed bounded region in $\mathbb{R}^{K^2}$ (see Theorem~\ref{th:PprobSolutionProp}) and the function $\det P$ is continuous on $\mathcal{P}$. Moreover, this minimum is non-zero in the case of linearly independent source distributions $d_i, i\in\overline{1,K},$ since each mixture matrix $P\in\mathfrak{P}$ is non-degenerate one, i.e., $\det P \neq 0$ (see Theorem~\ref{th:PprobSolutionProp}).
\end{remark}

The resulting variable $\hat{Y}$ is \emph{extreme simultaneously in two senses}. 
The minimization of the determinant in Eq.~(\ref{eq:def_EEV}) implies that, on the one hand, we try to \emph{find extreme variable $\hat{Y}$, such that its probabilities $\Prob_{\hat{Y} \mid V}(j \mid i) = p^j_i$ differ between the variants $\mathcal{V}$ as little  as possible}.
On the other hand, for any set $\{x_k\}_{k=1}^K\subset\mathcal{X}$ of $K$ elements, we can consider the matrices $D = [d_i(x_k)]_{i,k=1}^K$ and $F = [f_j(x_k)]_{j,k=1}^K$. They are connected by the identity $F=P^{-1}D$ (see Eq.~(\ref{eq:Fx_from_Dx})), and, thus, the identity $|\det F|=|\det D|/|\det P|$ holds. Since the matrix $D$ is a constant one for the given source distributions, this identity infers that the optimal decomposition maximizes the disagreement of the decomposition basis $\mathbf{f}$ (i.e., its \emph{power}).
In terms of the effect variable, we try to \emph{find such effect variable $\hat{Y}$ that has the most different distributions in the basis $\mathbf{f}$}. This agrees well with the intuition that $\{f_j\}_{j=1}^K$ represent $K$ extremely different (absolute) states of an event.



\begin{statement}
	\label{state:EEV_permut}
	Let $(\hat{P}, \hat{\mathbf{f}})$ be an optimal distribution decomposition for Problem~\ref{prob:DDproblem}, 
	then the distribution decomposition $(\widetilde{P}, \widetilde{\mathbf{f}})=(\hat{P}Q, [\hat{f}_{\sigma_Q(j)}]_{j=1}^K)$, obtained by a permutation $\sigma_Q$ of the decomposition basis $\hat{\mathbf{f}}$, is an optimal distribution decomposition as well ($Q$ is the corresponding permutation matrix).
\end{statement}
\begin{proof}
	This statement holds, since $(\widetilde{P}, \widetilde{\mathbf{f}})$ is also a solution to Problem~\ref{prob:DDproblem} (see Theorem~\ref{th:PprobSolutionProp}) and the absolute value of the determinant of a permutation matrix equals to $1$, i.e., 
	\begin{equation*}
	|\det \widetilde{P}| = |\det\hat{P}\det Q| = |\det\hat{P}| = \underset{P\in\mathfrak{P}} {\min} |\det P|.
	\end{equation*}
\end{proof}
In order to clearer understand the meaning of the extreme effect variable, one considers two special cases of the observed and variant variables.

\subsection{The case of $|\mathcal{X}| = K$}
In this case, the observable variable has the same number of states as the variant and the effect variables.
Since $|\mathcal{X}| = K$, we numerate the elements of this set: $\mathcal{X} = \{x_k\}_{k=1}^{K}$ 
and consider the functions $\{d_{i}(x)\}_{i=1}^K$ and $\{f_{j}(x)\}_{j=1}^K$ 
in the following matrix forms:
\begin{equation*}
D = \left[d_{i}(x_k)\right]_{i,k=1}^K \equiv \left[\mathbf{d}(x_1) ...\mathbf{d}(x_K)\right]  
\quad\hbox{and}\quad 
F = \left[f_{j}(x_k)\right]_{j,k=1}^K \equiv \left[\mathbf{f}(x_1)...\mathbf{f}(x_K)\right].
\end{equation*}
Hence, in this case, the restriction on the functions $\{d_{i}(x)\}_{i=1}^K$ ($\{f_{j}(x)\}_{j=1}^K$)  to be distributions (as in Def.~\ref{def:Distribution}) is equivalent 
to the restriction on the matrix $D$ ($F$) to be a stochastic one (as in Def.~\ref{def:RightStoch}). Thus, given the stochastic matrix $D$, the distribution decomposition in Eq.~(\ref{eq:DD_matrix}) could be written in the following form:
\begin{equation}
\label{eq:D_PF}
D = PF,
\end{equation}
and Problem~\ref{prob:DDproblem} could be reformulated as follows: find two stochastic matrices $P$ and $F$, such that Eq.~(\ref{eq:D_PF}) holds.

For any stochastic matrix, it is easy to show that the following lemma holds \cite{lancaster1985theory,axelsson1996iterative}.
\begin{lemma}
	\label{lemma:RstochLess1}
	(a) If $Q=[q^j_i]_{i,j=1}^K$ is a stochastic matrix, then $|\det Q| \le 1$. (b) If $Q=[q^j_i]_{i,j=1}^K$ is a stochastic matrix and  $|\det Q| = 1$, then $Q$ is a permutation matrix.
\end{lemma}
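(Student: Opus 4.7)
The plan is to bound $|\det Q|$ via Hadamard's inequality applied to the rows of $Q$ and then to track the equality case for part (b).

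For part (a), I would denote by $r_i = (q^1_i, \ldots, q^K_i)$ the $i$-th row of $Q$. The key observation is that $\|r_i\|_2 \le 1$ for every $i$: non-negativity together with $\sum_j q^j_i = 1$ forces every entry to satisfy $q^j_i \in [0,1]$, hence $(q^j_i)^2 \le q^j_i$, and summing in $j$ gives $\|r_i\|_2^2 \le \sum_j q^j_i = 1$. Hadamard's inequality $|\det Q| \le \prod_{i=1}^K \|r_i\|_2$ then immediately yields $|\det Q| \le 1$.

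For part (b), suppose $|\det Q| = 1$. Every inequality in the chain $|\det Q| \le \prod_i \|r_i\|_2 \le 1$ must then be tight, and in particular $\|r_i\|_2 = 1$ for every $i$. The bound $\sum_j (q^j_i)^2 \le \sum_j q^j_i$ is tight only when $(q^j_i)^2 = q^j_i$ for every $j$, i.e., $q^j_i \in \{0,1\}$; combined with $\sum_j q^j_i = 1$ this says each row contains exactly one entry equal to $1$ and all the rest equal to $0$. Finally, $|\det Q| = 1 \neq 0$ implies $Q$ is non-singular, so its rows are linearly independent; consequently the columns in which the $1$s sit must be pairwise distinct, meaning that each column also contains exactly one $1$. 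That is precisely Definition~\ref{def:PermutMatrix}.

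The only substantive step is the Hadamard bound; everything else is a short case analysis. I do not expect any real obstacle here, but the argument does rely essentially on non-negativity — the row-norm estimate $\|r_i\|_2 \le 1$ would fail for a general matrix whose rows merely sum to $1$, so the two defining conditions of Definition~\ref{def:RightStoch} must both be used.
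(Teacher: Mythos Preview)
Your argument is correct. The paper itself does not prove this lemma---it merely records it as a standard fact with references to linear algebra textbooks---so there is no in-paper approach to compare against; your Hadamard-based argument is a clean, self-contained way to supply the omitted justification.
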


\begin{theorem}
	\label{th:EES_for_X_K}
	In the case of $|\mathcal{X}| = K$, the matrix $D$ is the extreme mixture matrix and the functions $\{f_{j}\}_{j=1}^K,$ defined by $f_{j}(x_k) = \delta^j_k\footnote{$\delta^j_k$ is the Kronecker symbol, i.e., it equals $1$, if $i=j$, and $0$, otherwise.}, j,k\in\overline{1,K},$ are the extreme decomposition basis.
\end{theorem}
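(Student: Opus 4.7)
The plan is to observe that under the identification $|\mathcal{X}|=K$, Problem~\ref{prob:DDproblem} takes the clean matrix form $D=PF$ with both $D$ and $F$ required to be stochastic, and then to exploit multiplicativity of the determinant together with the bound $|\det Q|\le 1$ for stochastic $Q$ (Lemma~\ref{lemma:RstochLess1}(a)).

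First I would verify feasibility: the candidate pair $(\hat{P},\hat{\mathbf{f}})=(D,\{f_j\}_{j=1}^K)$ with $f_j(x_k)=\delta^j_k$ corresponds to $F=I_K$. Since $D=D\cdot I_K$, identity~(\ref{eq:D_PF}) is satisfied. The matrix $D$ is stochastic by the hypothesis on $\{d_i\}_{i=1}^K$ (its rows are the source distributions), and invertibility of $D$ follows from Statement~\ref{state:DindepPinvert} since the source distributions are assumed linearly independent (as required for the extreme mixture matrix to be well-defined via Remark~\ref{remark:EEV_existence}). The rows of $F=I_K$ are the indicator distributions $f_j(x_k)=\delta^j_k$, so $F$ is stochastic and the non-negativity constraints~(\ref{eq:Pproblem}) hold. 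Hence $D\in\mathfrak{P}$.

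Next I would prove optimality. Let $(P,F)$ be any solution of Problem~\ref{prob:DDproblem} in this setting. By the reformulation $D=PF$ (Eq.~(\ref{eq:D_PF})) and multiplicativity of the determinant,
\begin{equation*}
|\det D| = |\det P|\cdot |\det F|.
\end{equation*}
By the third condition of Problem~\ref{prob:DDproblem}, each $f_j$ is a distribution over $\mathcal{X}=\{x_k\}_{k=1}^K$, so the rows of $F$ are non-negative and sum to $1$, i.e., $F$ is a stochastic matrix. Lemma~\ref{lemma:RstochLess1}(a) then gives $|\det F|\le 1$, whence
\begin{equation*}
|\det P| \;\ge\; |\det D| \qquad \forall\, P\in\mathfrak{P}.
\end{equation*}
Since this lower bound is attained by $P=D$ (as shown in the feasibility step, where $F=I_K$ and $|\det I_K|=1$), Definition~\ref{def:EEV} yields $\hat{P}=D$ and $\hat{\mathbf{f}}=\{f_j:f_j(x_k)=\delta^j_k\}_{j=1}^K$.

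I expect the main subtlety to lie not in any single computation but in properly justifying that $F$ is stochastic (so Lemma~\ref{lemma:RstochLess1} applies): specifically, one must observe that in the $|\mathcal{X}|=K$ case the distributional constraints on the decomposition basis coincide exactly with the row-stochastic constraints on the matrix $F$, which is precisely the reduction already noted immediately before Eq.~(\ref{eq:D_PF}). Everything else is a one-line consequence of the determinant identity.
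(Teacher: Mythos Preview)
Your proposal is correct and follows essentially the same route as the paper: verify that $(D,I_K)$ is a feasible solution, then use $D=PF$ and multiplicativity of the determinant together with Lemma~\ref{lemma:RstochLess1}(a) to conclude that $|\det P|\ge |\det D|$ with equality at $P=D$. The only cosmetic difference is that the paper phrases the optimality step as maximizing $|\det F|$ rather than lower-bounding $|\det P|$, which is the same inequality read from the other side.
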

\begin{proof}
	First, note that the decomposition basis defined by Eq.~(\ref{eq:Pproblem_F}) is $f_{j}(x_k, D, \mathbf{d}) = \delta^j_k, j,k\in\overline{1,K},$ and its components satisfy the conditions in Eq.~(\ref{eq:Pproblem}). Hence, the matrix $D$ is a mixture matrix for Problem~\ref{prob:Pproblem} (i.e., $D\in\mathfrak{P}$). Second, we will show that it is the extreme mixture matrix.
	From Eq.~(\ref{eq:D_PF}) which holds for any mixture matrix $P\in\mathfrak{P}$, we get the following identity:
	\begin{equation}
	\label{eq:detF_detP_detD}
	|\det F| = |\det P|^{-1} |\det D| \Fa P\in\mathfrak{P}.
	\end{equation}
	Thus, the minimization of the determinant $|\det P|$ is equivalent to the maximization of the determinant $|\det F|$, since $\det D$ is given.
	On the one hand, Lemma~\ref{lemma:RstochLess1} implies $|\det F(P,\mathbf{d})| \le 1$, since the matrix $F(P,\mathbf{d})$, that correspond to the decomposition basis $\mathbf{f}(x, P,\mathbf{d})$,  is a stochastic one. On the other hand, for $P = D$, we have $F(D,\mathbf{d}) = E$, that reaches the maximum (i.e, $\det F(D,\mathbf{d}) = 1$). 
	Hence, the theorem is proved.
\end{proof}

This theorem implies that the observable variable $X$ is the extreme effect variable $\hat{Y}$. Moreover, any extreme effect variable $\hat{Y}$ is equal to the observable variable $X$ up to a proper matching of their states (see Statement~\ref{lemma:RstochLess1}), i.e. $\Prob_{\hat{Y} \mid V} = \Prob_{X\mid V}$.  

\subsection{The binary variable case ($K = 2$)}
\label{subsubsec:binarycase}
We return to the general case of $\mathcal{X}$ and consider now the binary variant variable $V$ (and, thus, any effect variable $Y$ is also binary). This lead us to the situation considered in~\cite{2015-KDD-Nikolaev}. Let $\mathcal{V} = \{A, B\}$ and $\mathcal{Y} = \{0,1\}$, then any mixture matrix $P\in\mathfrak{P}$ and the distribution decomposition in Eq.~(\ref{eq:DD_matrix}) could be written
in the following form:
\begin{equation}
\label{eq:DD_2D}
P = \left(
\begin{array}{cc}
p_A  & 1 - p_A \\
p_B  & 1 - p_B 
\end{array}
\right);
\qquad
\begin{cases}
d_A(x) = p_A f_0(x) + (1-p_A) f_1(x), \\
d_B(x) = p_B f_0(x) + (1-p_B) f_1(x) 
\end{cases}
\Fa  x\in\mathcal{X}.
\end{equation}
The conditions in Eq.~(\ref{eq:Pproblem}) are linear for $\det P > 0$ ($\det P < 0$), namely, they are
\begin{equation}
\label{eq:Pproblem_2D}
f_1(x) = \frac{(1-p_B)d_A(x) -  (1-p_A)d_B(x)}{\det P} \ge 0,
\: 
f_0(x) = \frac{p_A d_B(x) -  p_B d_A(x)}{\det P} \ge 0 \:\Fa x\in\mathcal{X}.
\end{equation}
Hence, the set of all solutions of Problem~\ref{prob:DDproblem} in terms of the set $\mathcal{P}$ on the plane of the mixture coefficients $(p_A, p_B)$
is the union two quadrangles that are centrally symmetric to each other w.r.t.\ the point $(1/2,1/2)$ and are limited in the unit square $[0,1]^2$ by the union of two pairs of intersected half-planes (see Fig.~\ref{img_2d_plane}):
\begin{equation}
\label{eq:solutionAlphaPos_2D}
p_B \ge M p_A \quad \hbox{and} \quad p_B \ge 1 - m(1 - p_A),
\end{equation}
\begin{equation}
\label{eq:solutionAlphaNeg_2D}
p_B \le m p_A \quad \hbox{and} \quad p_B \le 1 - M(1 - p_A),
\end{equation}
where
\begin{equation}
\label{eq:MaxMinDef_2D}
m = \inf\limits_{x\in\mathcal{X}}\frac{d_B(x)}{d_A(x)}\in [0,1), \:\:\:\: M = \sup\limits_{x\in\mathcal{X}}\frac{d_B(x)}{d_A(x)}\in (1,+\infty].
\end{equation}

In the considered case, the minimization of the absolute value of the  determinant of the mixture matrix ($|\det P| \rightarrow \min$) is equivalent to the minimization of the absolute value of the difference between the mixture coefficients ($|p_B - p_A| \rightarrow \min$), since $\det P = p_A (1-p_B) - p_B (1-p_A) = p_A - p_B$. In other word, the extreme effect variable $\hat{Y}$ has such decomposition basis $\hat{\mathbf{f}}$ that supplies the minimal \emph{mixture difference} $\alpha = p_B - p_A$ between the probabilities of the variable $\hat{Y}$ to be in the state $0$ (or $1$) for the variants $A$ and $B$. 
Therefore, for $K=2$, Problem~\ref{prob:Pproblem} is linear both in terms of the objective $\det P$, and in terms of the constraints (\ref{eq:Pproblem_2D})\footnote{This is opposite to the cases with $K>2$, where the problem is non-linear.}. Note that there are only two extreme solutions (see Fig.~\ref{img_2d_plane}), and they could be obtained from each other by the swap: $\alpha \leftrightarrow -\alpha$, $p_A \leftrightarrow 1-p_A$, and $p_B \leftrightarrow 1-p_B$. Thus, from Eq.~(\ref{eq:solutionAlphaPos_2D}), Eq.~(\ref{eq:solutionAlphaNeg_2D}), and Fig.~\ref{img_2d_plane}, one infers that the extreme mixture coefficients have the form (for the case of $\alpha > 0$):
\begin{equation}
\label{eq:EEV_2D}
\hat{p}^+_A = \frac{1-m}{M-m}, \qquad \hat{p}^+_B = \frac{M(1-m)}{M-m}, 
\end{equation}
and the difference $\hat{\alpha}^+ = \frac{(M-1)(1-m)}{M-m}$. Note that the \emph{relative difference} $\beta = \alpha / p_A$ for the extreme case has a very simple form:
$\hat{\beta}^+ =M-1$.


\begin{figure}
	\includegraphics[width=0.7\textwidth]{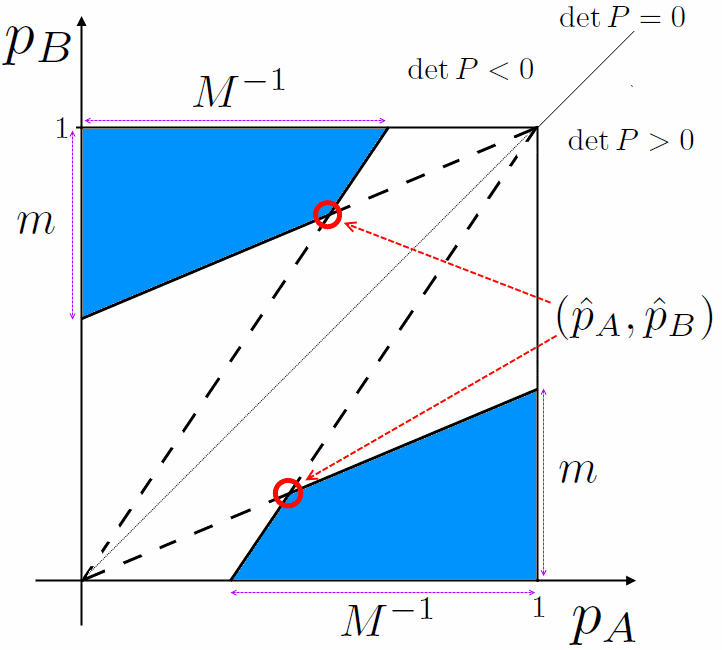}
	\caption{The set $\mathcal{P}$ on the plane $\mathbb{R}^2_{(p_A,p_B)}$ and the extreme mixture coefficients $(\hat{p}_A,\hat{p}_B)$.}
	\label{img_2d_plane}
\end{figure}

\subsection{Summary}
In this section, we introduce the notion of the extreme effect variable $\hat{Y}$ and prove its existence. 
We show that its extreme property is two-fold: 
first, the distributions $\Prob_{\hat{Y} \mid V=v}$ of the variable $\hat{Y}$ w.r.t.\  the variants $v\in\mathcal{V}$ are close each other as much as possible (in terms of the volume of its mixture matrix $\hat{P}$);
and, second, the states of the variable $\hat{Y}$ are extremely different w.r.t. the observable variable $X$ (in terms of the conditional distributions $\Prob_{X \mid \hat{Y}}$).
These intuitions are observed in two considered special cases. The first one relates to the case of the equal number of states of the observable variable $X$ and the variant one $V$ (i.e., $|\mathcal{X}|=|\mathcal{V}|$), where we show that the extreme effect variable $\hat{Y}$ is the observable one $X$.
The second case considers the binary effect variable $Y$ (i.e., $|\mathcal{Y}|=|\mathcal{V}|=2$), where we explain its geometric interpretation and demonstrate the exact formulas for the distributions $\Prob_{\hat{Y} \mid V}$ and $\Prob_{X \mid \hat{Y}}$ for the extreme effect variable $\hat{Y}$. This binary case of the variants is popular in applications: for instance, in A/B testing, the state-of-the-art technique to evaluate a web service~\cite{2010-KDD-Tang,2013-KDD-Bakshy,2014-KDD-Kohavi,2014-WWW-Deng,2015-WWW-Drutsa}, two variants of the service are compared w.r.t.\ a key metric (see the next section).

%


\begin{table}
	\caption{The extreme mixture matrix~$\hat{P}$ and the extreme difference~$\hat{\alpha}^+$ for two A/B tests.}
	\label{tbl_ClModelPerplex}
	\centering
	\begin{tabular}{|c|c|c||c|c|c|}
		\hline
		
		\multicolumn{3}{|c||}{ Exp. \#1 (the number of clicks)}
		& 		\multicolumn{3}{|c|}{ Exp. \#2 (the presence time)} \\
		\hline
		
		 $\hat{P}$ &  $Y = 0$ &  $Y = 1$ 
		& 
				 $\hat{P}$ &  $Y = 0$ &  $Y = 1$ \\
		
		\hline
		
		 $V = A$ &  0.5617 &  0.4384 
		& 
			 $V = A$ &  0.3862 &  0.6138 \\
		\hline
		 $V = B$ &  0.5805 &  0.4195
		& 
				 $V = B$ &  0.4131 &  0.5869 \\
		
		\hline
		\multicolumn{3}{|c||}{ $\hat{\alpha}^+ = 0.0188$, $\hat{\beta}^+ = 0.0336$}
		& 
		\multicolumn{3}{|c|}{ $\hat{\alpha}^+ =0.0269$, $\hat{\beta}^+ = 0.0696$}\\
				
		\hline
	\end{tabular}
\end{table}

\section{Application to online evaluation}
\label{sec_PracExamples}
Online controlled experiments, such as A/B tests, are the state-of-the-art techniques for improving web services based on data-driven decisions and are widely used by many Internet companies~\cite{2010-KDD-Tang,2013-KDD-Bakshy,2014-KDD-Kohavi,2014-WWW-Deng,2015-WWW-Drutsa}. 
The aim of the controlled experiments is to detect the causal effect of the system updates on its performance relying on a user behavior metric that is assumed to correlate with the quality of the system. 
Users are randomly exposed to one of the two variants of a service (the control (A) and the treatment (B), e.g.,  the current production version of the service and its update) in order to compare these variants~\cite{2009-DMKD-Kohavi}.

The experiment's objective is quantitatively measured by \emph{an evaluation metric} $\mu$ (also known as the online service quality metric, the Overall Evaluation Criterion (OEC), etc. \cite{2009-DMKD-Kohavi}), which is usually the average value $\mu = \mathrm{avg}_{\Omega}X$ of a scalar measure $X(\omega)$ (a key metric) over the events (entities) $\omega\in\Omega$, e.g., users.
Thus, for each user group $v \in \{A,B\}$,  we have the distribution $d_v$ of the measure $X$ over the experimental units $\Omega_v$. Then, the average values $\mu_v = \mathrm{avg}_{\Omega_v}X$, $v \in \{A,B\}$ are used as OEC and their difference $\Delta = \mu_B - \mu_A$ is calculated.
Since, the set of variants $\mathcal{V}=\{A,B\}$, the observable variable $X$, and its empirical conditional distributions $d_v(x), x\in\mathcal{X}, v\in\mathcal{V},$ are given, we are able to apply the technique described in Section~\ref{subsubsec:binarycase} in order to find the extreme effect variable and the optimal decomposition of distributions $\{d_A, d_B\}$.

In our study, we consider two popular engagement metrics $X$ of user activity~\cite{2010-CHI-Rodden}: the \emph{number of clicks} made by a user and the \emph{presence time} of a user, that are defined in the same way as in \cite{2007-WebKDD-Jansen,2013-WWW-Song,2013-CIKM-Lehmann,2015-WSDM-Drutsa,2015-WWW-Drutsa}. For each metric, we consider a large-scale A/B experiment conducted on \emph{real users} of Yandex\footnote{\url{https://www.yandex.com}}, one of the popular web search engines. Each of these A/B tests has been designed to evaluate a \emph{noticeable deterioration} of the user interface of the service, has lasted \emph{two weeks} and has affected at least hundreds of thousands of users. 

\begin{figure}
	\includegraphics[width=\textwidth]{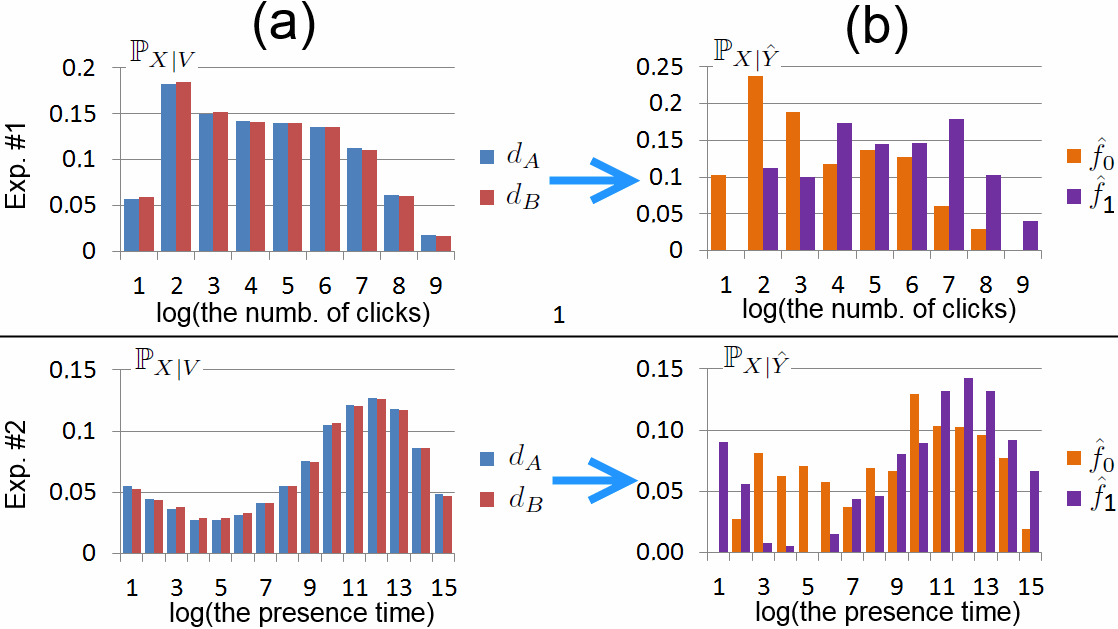}
	\caption{The source distributions $\{d_A, d_B\}$ and the extreme decomposition basis $\{f_0, f_1\}$ for two A/B experiments.}
	\label{img_exp_distr}
\end{figure}

The treatment effect of the first A/B experiment is detected by the decrease of the number of clicks per user by $1.9\%$ (i.e., $\Delta/\mu_A =0.019$), while the treatment effect of the second one is detected by the decrease of the presence time per user by  $1.3\%$ (i.e., $\Delta/\mu_A =0.013$)\footnote{The differences $\Delta$ in both experiments are significant with p-value of \emph{two-sample t-test} lower than $0.05$ (the state-of-the-art threshold \cite{2009-DMKD-Kohavi,2013-WSDM-Deng,2013-WWW-Song,2014-KDD-Kohavi,2015-WSDM-Drutsa,2015-WWW-Drutsa}).}.
In Fig.~\ref{img_exp_distr}(a), we present the distributions $d_A$ and $d_B$ for the metrics of these experiments\footnote{The values of the metrics are logarithmically transformed, multiplied by a fixed random constant and, then, are binarized in order to obtain  discrete distributions for the observable variable $X$. The random constant is hidden for confidentiality reasons.}.
Then, we apply our extreme effect variable approach (see Eq.~(\ref{eq:Pproblem_2D}) and Eq.~(\ref{eq:EEV_2D})), which results in the extreme mixture matrix $\hat{P}$ and the extreme decomposition basis $\{\hat{f}_0, \hat{f}_1\}$, that are presented in Table~\ref{tbl_ClModelPerplex} and Fig.~\ref{img_exp_distr}(b), respectively, for each A/B experiment.

We see that in both experiments the distribution $\hat{f}_0$ corresponds to the extreme state $\hat{Y}=0$ that has lower values of $X$ than the extreme state $\hat{Y}=1$: 
in Exp.~\#1 (in Exp.~\#2), $\hat{f}_0$ corresponds to the state, where users have lower number of clicks (presence time) than in $\hat{f}_1$.
In both experiments, the fraction of the extreme state $\hat{Y}=0$ is increased in the treatment variant B of the service w.r.t.\ the control one (see $\hat{\alpha}^+$ in Table~\ref{tbl_ClModelPerplex}). Hence, we can conclude that the treatment effect in both experiments is negative, since the fraction of the negative extreme state $\hat{Y}=0$ is increased by $\hat{\beta}^+$ percents (by $3.36\%$ for Exp.~\#1 and by $6.96\%$ for Exp.~\#2). This coincides with the conclusion based on the difference of the mean values of the key metrics ($\Delta/\mu_A$).

Summarizing,  \emph{our approach provides more additional information} that allows us to understand more clearly how exactly user behavior, observed through the variable $X$, differs between the variants in an A/B experiment.
First, the approach explains the difference between the observed distributions $d_A$ and $d_B$ \emph{through description of the extreme latent states, such that their mixture is minimally affected by the treatment effect} of the A/B experiment (see Def.~\ref{def:EEV}).
Second, our approach \emph{quantifies this whole (total) difference between the distributions in one scalar metric} $\hat{p}_v, v\in\{A,B\}$, whose relative difference $\hat{\beta}^+$ between the variants is noticeably higher than the one for the mean value (i.e., $\Delta/\mu_A$), since the extreme effect variable $\hat{Y}$ is aware of all the  treatment effect (see Def.~\ref{def:ELV} and~\ref{def:EEV}) observed in the whole conditional distribution $\Prob_{X \mid V}$ of the variable $X$, not only in its mean value.


\section{Conclusions}
In our study, we generalized the notion of the effect variable to the (multidimensional) finite-state case.
We translated the problem of finding an effect variable to the simultaneous decomposition of the conditional distributions of the observable variable under the states of the variant variable. We conducted  theoretical analysis of these problems and their solutions. 
We applied our approach to online evaluation of a web search engine through A/B testing and showed its utility by providing clear additional intuition about the evaluation criterion.


\subsection*{Acknowledgments}
I would like to thank  Gleb Gusev who discussed this study with me and inspired me to public it.

\nocite{2015-WSDM-Drutsa}
\nocite{2015-WWW-Drutsa}
\nocite{2015-SIGIR-Drutsa}
\nocite{2015-KDD-Nikolaev}
\nocite{2015-CIKM-Drutsa}
\nocite{2016-KDD-Poyarkov}
\nocite{2017-WSDM-Kharitonov}
\nocite{2017-TWEB-Drutsa}
\nocite{2017-WWW-Drutsa}
\nocite{2018-WSDM-Budylin}
\nocite{2018-WWWTut-Budylin}
\nocite{2018-KDDTut-Budylin}
\nocite{2019-SIGIRTut-Drutsa}

%

\nocite{balke1997bounds}
\nocite{manski2009identification}
\nocite{drton2008lectures}

\nocite{Lauritzen-sufficiency-and-prediction}
\nocite{Lauritzen-extreme-point-models}
\nocite{Salmon-1971}
\nocite{Shalizi-Crutchfield-bottleneck}
\nocite{Tishby-Pereira-Bialek-bottleneck}

\bibliographystyle{plain}
\bibliography{2019-arxiv-ddm}

\end{document}